\newtheorem{lemma}{Lemma}[section]
\newtheorem{remark}{Remark}[section]
\newcommand{\coolname}{CPS}
\newcommand{\netvlad}{NetVLAD} 
\newcommand{\mulran}{MulRan}
\newcommand{\wildplaces}{WildPlaces}
\newcommand{\loggnet}{LoGG3D-Net}
\newcommand{\eg}{\emph{e.g.},}
\newcommand{\ie}{\emph{i.e.},}
\title{A Deeper Look into Second-Order Feature Aggregation for LiDAR Place Recognition}
\author{
  Saimunur Rahman\\
  CSIRO Robotics\\
  Data61, CSIRO,
  Australia\\
  \texttt{saimun.rahman@csiro.au} \\
  \And
  Peyman Moghadam\\
  CSIRO Robotics\\
  Data61, CSIRO, 
  Australia\\
  \texttt{peyman.moghadam@csiro.au} \\
}
\begin{document}
\maketitle

\begin{abstract}
Efficient LiDAR Place Recognition (LPR) compresses dense point-wise features into compact global descriptors. While first-order aggregators such as GeM and NetVLAD are widely used, they overlook inter-feature correlations that second-order aggregation naturally captures. Full covariance, a common second-order aggregator, is high in dimensionality; as a result, practitioners often insert a learned projection or employ random sketches—both of which either sacrifice information or increase parameter count. However, no prior work has systematically investigated how first- and second-order aggregation perform under constrained feature and compute budgets. In this paper, we first demonstrate that second-order aggregation retains its superiority for LPR even when channels are pruned and backbone parameters are reduced. 
Building on this insight, we propose \underline{C}hannel \underline{P}artition-based \underline{S}econd-order Local Feature Aggregation (\coolname{}): a drop-in, partition-based second-order aggregation module that preserves all channels while producing an order-of-magnitude smaller descriptor. \coolname{} matches or exceeds the performance of full covariance and outperforms random projection variants, delivering new state-of-the-art results with only four additional learnable parameters across four large-scale benchmarks: Oxford RobotCar, In-house, MulRan, and WildPlaces.

\end{abstract}

\keywords{Place recognition, LiDAR, Second-order pooling, Covariance.} 

\section{Introduction}

\begin{wrapfigure}[17]{r}{0.52\linewidth}
\vspace{-12pt}
\centering
\includegraphics[width=\linewidth]{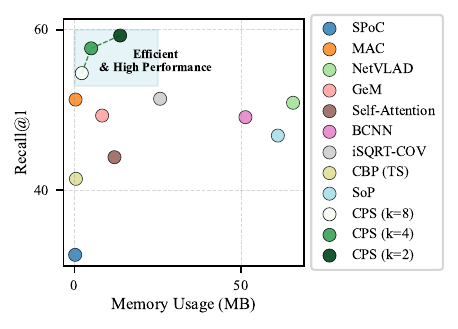}
\caption{R@1 comparison of proposed \coolname{} with common feature aggregation methods with Minkloc3D \citep{komorowski2021minkloc3d} on WildPlaces (Venman Env.) dataset. 
\coolname{} performs higher than others with less memory. }
\label{fig: top_fig}
\vspace{-10pt}
\end{wrapfigure}

LiDAR place recognition (LPR) is the task of identifying previously visited locations from LiDAR scans during the navigation of embodied agents in diverse and dynamic environments, \eg{} self-driving cars, drones, and devices using augmented or virtual reality. LPR methods are frequently formulated as retrieval tasks, leveraging compact descriptors of LiDAR data to enable efficient \citep{komorowski2021minkloc3d, vidanapathirana2022logg3d, komorowski2021egonn} and coarse localization \citep{vidanapathirana2023spectral}. These descriptors serve as global representations of unordered point clouds, typically constructed by aggregating local features using methods such as max pooling or Vector of Locally Aggregated Descriptors (VLAD) \citep{uy2018pointnetvlad, zhang2022kernelized}. For nearly a decade, deep learning based point cloud representation has been the key focus of LPR methods. This has led to developments in various aspects: 3D backbone \cite{uy2018pointnetvlad,komorowski2021minkloc3d,komorowski2022improving,komorowski2021egonn,zhang2019pcan, liu2019lpd, zhou2021ndt}, loss functions \citep{uy2018pointnetvlad,komorowski2022improving, komorowski2021egonn,vidanapathirana2022logg3d}, sequential representation \citep{vidanapathirana2021locus}, as well as global representation \citep{radenovic2018fine,uy2018pointnetvlad,komorowski2021minkloc3d}. Briefly, these learning based global representation methods (also commonly known as pooling methods) aggregate local features through maximization (\eg{} MAC \citep{radenovic2018fine}), averaging (\eg{} SPoC \citep{radenovic2018fine}), exponentiation (\eg{} GeM \citep{radenovic2018fine}), and VLAD (\eg{} NetVLAD \cite{arandjelovic2016netvlad}). State-of-the-art LPR methods commonly use GeM or VLAD, depending on their design. While the VLAD is more memory-intensive, both aggregation methods produce a lower dimensional global descriptor and encode first-order statistics of local descriptors.

GeM and VLAD, representative examples of First-order Aggregation (FoA) methods, aggregate local features by capturing their average or weighted contributions, showing strong performance in LPR. However, FoA methods treat local features independently and ignore their inter-relationships. To address this, Second-order Aggregation (SoA) methods were introduced to capture richer representations by modeling pairwise feature dependencies \citep{zhang2022kernelized, vidanapathirana2021locus}.  
However, SoA’s main drawback is its quadratic output dimensionality: for $d$ local features, it yields a $d \times d$ dimensional covariance matrix or $\frac{d(d+1)}{2}$ leveraging symmetry. This presents practical challenges in LPR, where descriptors need to be stored and compared across large-scale maps. High-dimensional descriptors increase memory usage, slow down retrieval due to expensive distance computations, and place a heavy load on storage and communication resources. Additionally, large descriptor sizes can lead to overfitting \citep{li2017factorized}, especially in scenarios with limited training data or high environmental variability. %

To address these limitations, we propose a new method that produces lower dimensional second-order global features.%
We call this method \underline{C}hannel \underline{P}artition-based \underline{S}econd-order Local Feature Aggregation (CPS). 
Theoretically, it reduces the overall output dimensionality by approximately a factor of $k$, \ie{} $\frac{\frac{d}{k}(\frac{d}{k}+1)}{2}$, compared to computing a full $d \times d$ covariance matrix, while still capturing rich intra-channel dependencies. CPS thus achieves lower output dimensionality, retains 
local feature information, and efficiently models second-order relationships. 
Fig. \ref{fig: top_fig} shows a comparison of three \coolname{} variants with common aggregation methods. We conduct extensive experiments with our proposed SoA method on four popular LPR datasets, namely, Oxford Robotcar \cite{maddern20171}, In-house \cite{uy2018pointnetvlad}, \mulran~\cite{kim2020mulran} and \wildplaces~\cite{knights2023wild}, to demonstrate its effectiveness. We also compare with the existing state-of-the-art, common FoA and SoA methods used in LPR and three popular covariance pooling methods from the computer vision domain \cite{gao2016compact,li2018towards}. %

\section{Related works}
\paragraph{First-order Methods:} Most LPR approaches rely on first-order pooling \cite{uy2018pointnetvlad,xia2021soe,liu2019lpd,xu2021transloc3d,zhang2019pcan,wiesmann2022retriever,wiesmann2022kppr, lin2023se}. PointNetVLAD \cite{uy2018pointnetvlad} introduced 3D feature pooling by integrating PointNet \cite{qi2017pointnet} with NetVLAD \cite{arandjelovic2016netvlad}, using linear projection to reduce NetVLAD’s high-dimensional output. However, the large number of parameters can cause overfitting in low-data scenarios. GeM pooling \cite{komorowski2021minkloc3d,knights2023geoadapt} offers compactness but is sensitive to outliers and limited by a single learnable parameter. Other works have employed global max \cite{barros2022attdlnet} or average pooling \cite{lai2022adafusion}, often augmented with cross-attention. In contrast, \coolname{} avoids linear projection, but can be used to further reduce its dimensionality. Each group-wise covariance matrix is regularized independently, offering more flexibility than GeM. There are PatchNetVLAD \cite{hausler2021patch}, MixVPR \cite{ali2023mixvpr}, and SALAD \cite{izquierdo2024optimal} methods proposed in Visual Place Recognition (VPR) in 2D, their applicability to LPR remains underexplored to date. Our work focuses on LPR, which presents unique challenges such as sparse point clouds and variable input sizes.  

\paragraph{Second-order Methods:} Widely used in 2D fine-grained recognition \cite{koniusz2018deeper,koniusz2021power}, second-order methods like Bilinear CNN \cite{lin2015bilinear}, its variants \cite{lin2017improved,li2017second}, and compact alternatives like CBP \cite{gao2016compact} leverage covariance pooling. A common approach used in most of these works to mitigate the dimensionality challenge is to apply linear projection, \ie{} $1\times 1$ convolution, before aggregation \citep{gao2016compact, li2018towards}. Here, the number of local features is reduced using a learnable transformation, which compresses them before computing the global representation. This strategy is driven by the empirical observation that SoA methods often maintain strong performance even when operating with fewer input descriptors. Following this insight, \loggnet{} \citep{vidanapathirana2022logg3d} adopts the same practice to produce compact global descriptors while keeping computational overhead low. In this work, we revisit and evaluate this popular strategy within both FoA and SoA settings for LPR. We ask: \textit{How does descriptor reduction via linear projection affect performance in each case?} Our results (as shown in Fig. \ref{fig: performance-vs-descriptor-count}) reveal a compelling and previously unreported finding -- SoA methods consistently outperform FoA even under significant descriptor size reduction. This robustness to reduced descriptor sizes, which may be known in the image domain, has not been studied in LPR before. Compact bilinear pooling (CBP) \cite{gao2016compact}, along with its subsequent variants, provides an alternative direction by approximating high-dimensional SoA outputs without explicitly reducing local features. However, these methods were originally developed for the image domain and have not been explored in the context of LPR. Moreover, our evaluation of CBP’s original, highest-performing configuration yields suboptimal results in this domain (please see Tab. \ref{tab: soa-pooling-comparisons} for detailed results). In contrast, our method requires no projections, making it more adaptable and integration-friendly for deep neural networks.

In LPR, Locus \cite{vidanapathirana2021locus} introduced second-order pooling via per-descriptor covariance matrices, which was then extended by \loggnet{} \cite{vidanapathirana2022logg3d} to end-to-end descriptor learning. Despite its strong performance, \loggnet{} is memory-intensive due to per-point covariance computations that scale with the size of the input point cloud. To mitigate this, \loggnet{} uses fewer feature channels; however, most modern networks employ significantly more, which raises scalability concerns. Additionally, it relies on eigen-decomposition-based matrix normalization, which is not GPU-friendly and is prone to instability issues~\cite{li2017second, engin2018deepkspd}. 
Our approach addresses this by computing a single channel-wise covariance matrix, significantly reducing memory demands. It employs GPU-friendly normalization and avoids numerical instability. 

While the above methods compute SoA via the covariance matrix which captures linear correlations, \citet{zhang2022kernelized} introduced SoA via the kernel matrix which captures non-linear correlations. However, they use it to enrich max-pooled representations. Despite the hybrid nature of their method, we isolate their kernel component and assess its ability to capture non-linear correlations using our \coolname{} matrix normalization. Results are provided in the Supplementary Material. We mention the covariance can be considered as a special case of the kernel matrix when a linear kernel is used; therefore, theoretically, our proposed \coolname{} can capture even better second-order information with kernel matrices. We leave the exploration in this direction for our future work.

\section{Proposed Method: Basics of SoA for LiDAR and Detail of \coolname{}}
The aim of \coolname{} is to learn a low-dimensional global representation $\mathbf{z}$ of a point cloud $\mathbf{P}^{N\times 3}$, consisting of $N$ unordered points, for LPR (as shown in Fig.~\ref{fig: main figure}). We formulate second-order aggregation (SoA) for LPR following the Improved BCNN framework \citep{lin2017improved}, and subsequently introduce CPS to address its limitations. Our approach is inspired by SoA techniques in the image domain \citep{lin2015bilinear, gao2016compact, lin2017improved, li2017second, lin2018second, koniusz2018deeper, koniusz2021tensor, koniusz2021power}, as well as recent advances in SoA for LPR \citep{zhang2022kernelized}. We begin with a basic formulation of SoA for LiDAR point clouds before presenting our method.

\paragraph{Basics of second-order local point feature aggregation:}
Let $\mathbf{X}_{d\times N} = [x_1, x_2, ..., x_{N}]$ be a data matrix containing $N$ columns of $d$-dimentional local descriptors extracted by a deep network $\phi(\cdot)$ from point cloud $\mathbf{P}$ (\ie{} we use a sparse convolution based network indentical to \citep{komorowski2021minkloc3d,komorowski2022improving} in our experiments). Following \cite{lin2015bilinear}, a sample covariance matrix $\mathbf{C}$ of $\mathbf{X}$ is computed as
\begin{equation}
    \mathbf{C}_{d\times d} = \frac{1}{N} \bar{\mathbf{X}} \bar{\mathbf{X}}^\top, \quad \text{where } \bar{\mathbf{X}} = \mathbf{X} - \boldsymbol{\mu}, \quad \boldsymbol{\mu} = \frac{1}{M} \sum_{i=1}^{M} \mathbf{x}_i,
    \label{eq: covariance matrix}
\end{equation}
where $\bar{\mathbf{X}}$ denotes centered $\mathbf{X}$ and $\top$ denotes transpose. The $(i,j)$-th entry of $\mathbf{C}$ represents the correlation among $d$ components. Since $\mathbf{C}$ is a symmetric positive definite matrix, matrix normalization such as power normalisation (PN) \cite{lin2017improved,li2017second} is often applied to respect its Riemannian geometry and to further boost its representational ability to combat feature burstiness -- a phenomenon where the same feature appears many times. PN is traditionally done via matrix decomposition \citep{ionescu2015matrix}. Let us obtain eigenvectors $\mathbf{U}$ and eigenvalues $\mathbf{D}$ of $\mathbf{C}$ via eigendecompostion as $\mathbf{C}=\mathbf{U}\mathbf{D}\mathbf{U}^\top$. The PN can be performed to $\mathbf{C}$ by converting the matrix power to the power of eigenvalues. Briefly, the process is defined as
\begin{equation}
    \mathbf{C}^\alpha = \mathbf{U}\mathbf{D}^\alpha\mathbf{U}^\top, \quad \text{where } \alpha=0.5.
\end{equation}
Several works have improved the numerical stability of end-to-end PN, \eg{} \citep{li2017second,engin2018deepkspd}. After applying PN, the resultant matrix (or its upper triangular entries due to the matrix being symmetric) can be used as a global representation of $\mathbf{P}$ for the LPR task. Eq. (\ref{eq: covariance matrix}) clearly shows that the size of $\mathbf{C}$ increases as the size $d$ (\ie{} number of feature channels) increases. Furthermore, it remains permutation invariant (please see the Supplementary Materials for more details).

Instead of reducing the size of $d$ with linear projection such as $1\times 1$ convolution to obtain a smaller $\mathbf{C}$ as in prior works, CPS computes SoA without reducing $d$ following below sequential steps:

\begin{figure}
    \centering
    \includegraphics[width=1\linewidth]{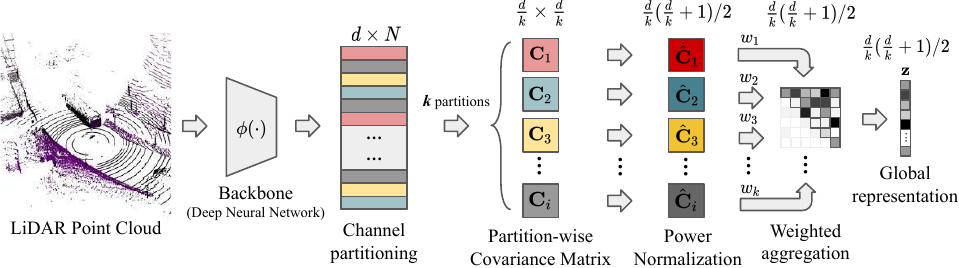}
    \caption{Overview of CPS. Given descriptors $\mathbf{X} \in \mathbb{R}^{d\times N}$ from a 3D backbone $\phi(\cdot)$, we partition them into $k$ disjoint groups, compute a normalized second-order, \ie{} covariance, matrix per group, and aggregate the upper-triangular entries to form the lower dimensional global representation $\mathbf{z}$.} 
    \label{fig: main figure}
\end{figure}

\paragraph{Channel partitioning:}  
Instead of computing the covariance matrix over the entire feature matrix, CPS divides the feature matrix $\mathbf{X} \in \mathbb{R}^{d \times N}$ along its channel (row) dimension into $k$ non-overlapping submatrices $\mathbf{X}_1, \mathbf{X}_2, \ldots, \mathbf{X}_k$, where each $\mathbf{X}_i \in \mathbb{R}^{d_i \times N}$ and $\sum_{i=1}^{k} d_i = d$. This yields the decomposition $\mathbf{X} = [\mathbf{X}_1, \mathbf{X}_2, \ldots, \mathbf{X}_k]^\top$, with each submatrix representing a distinct group of channels. We denote this operation as $\mathtt{partition}(\mathbf{X}, k) = \{ \mathbf{X}_1, \ldots, \mathbf{X}_k \}$.

\paragraph{Group-wise covariance matrix computation:}  
For each submatrix $\mathbf{X}_i \in \mathbb{R}^{d_i \times N}$ obtained through channel partitioning, we compute an empirical covariance matrix $\mathbf{C}_i \in \mathbb{R}^{d_i \times d_i}$ defined as:
\begin{equation}
    \mathbf{C}_i = \frac{1}{N} \, \bar{\mathbf{X}}_i \, \bar{\mathbf{X}}_i^\top, \quad \text{where } \bar{\mathbf{X}}_i = \mathbf{X}_i - \boldsymbol{\mu}_i \mathbf{1}^\top,
\end{equation}
with $\boldsymbol{\mu}_i = \frac{1}{N} \sum_{j=1}^{N} \mathbf{x}^{(i)}_j \in \mathbb{R}^{d_i}$ denoting the mean vector of the $i$-th group and $\mathbf{1} \in \mathbb{R}^N$ denoting a column vector of ones. Each column $\mathbf{x}^{(i)}_j \in \mathbb{R}^{d_i}$ corresponds to the $j$-th sample in group $i$. This formulation ensures that $\bar{\mathbf{X}}_i$ is column-centered prior to covariance computation

\paragraph{Efficient normalization of group-wise covariance matrix:}
To obtain robust feature representations, we apply matrix power normalization to each group-wise covariance matrix $\mathbf{C}_i \in \mathbb{R}^{d_i \times d_i}$. While conventional approaches rely on eigenvalue decomposition to compute the matrix square root $\mathbf{C}_i^{1/2}$, such methods are incompatible with efficient GPU parallelism \citep{li2017second,li2018towards,koniusz2018deeper}. As an alternative, we adopt the Newton-Schulz (NS) iteration \citep{li2017second,li2018towards}, which provides a numerically stable and GPU-friendly approximation of matrix square roots using only matrix multiplications. Prior to iteration, we perform pre-normalization by the trace to ensure bounded spectral norm: $\tilde{\mathbf{C}}_i = \mathbf{C}_i/\operatorname{tr}(\mathbf{C}_i)$. The iteration is then initialized as: $\mathbf{Y}_0 = \tilde{\mathbf{C}}_i, \mathbf{Z}_0 = \mathbf{I}$, and proceeds for $T$ steps using:
\begin{equation}
    \mathbf{Y}_{t+1} = \frac{1}{2} \mathbf{Y}_t (3\mathbf{I} - \mathbf{Z}_t \mathbf{Y}_t), \quad
\mathbf{Z}_{t+1} = \frac{1}{2} (3\mathbf{I} - \mathbf{Z}_t \mathbf{Y}_t) \mathbf{Z}_t,
\quad \text{for } t = 0, 1, \ldots, T-1.
\end{equation}
The output \( \mathbf{Y}_T \) serves as an approximation of \( \tilde{\mathbf{C}}_i^{1/2} \). To preserve consistency across groups, we apply post-normalization by trace: $\hat{\mathbf{C}}_i = \sqrt{\operatorname{tr}(\mathbf{C}_i)} \cdot \mathbf{Y}_T.$
The upper triangular entries of the resulting normalized covariance matrices \( \hat{\mathbf{C}}_i \) are then used as final representations for each channel group. 

\paragraph{Weighted aggregation of normalized second-order features:}
Let \( \mathbf{c}_i \in \mathbb{R}^{d_i(d_i+1)/2} \) denote the vector containing the upper triangular entries (including the diagonal) of the normalized group-wise covariance matrix \( \hat{\mathbf{C}}_i \). These vectors capture the second-order statistics of each channel group. To obtain a compact global representation, we compute a weighted sum over all \( k \) groups: $\mathbf{z} = \sum_{i=1}^{k} w_i \mathbf{c}_i$, where \( w_i \in \mathbb{R} \) are learnable scalar weights optimized end-to-end. This aggregation scheme allows the model to adaptively emphasize more informative group-level second-order descriptors, enhancing the expressiveness of the final representation \( \mathbf{z} \).

It is important to note that the covariance computation of partitioned channel groups in \coolname{} remains permutation invariant. The NS iteration based power normalization process and weighted aggregation do not alter that. Therefore, \coolname{} produces permutationally invariant representations, which is important for LPR. The channel partitioning preserves correlations within the group but sacrifices inter-group correlations as a trade-off between compactness and performance.

\begin{remark}[Second-order Statistics Coverage of \coolname{}]\itshape
\label{rm: sop-converge}
Let \(X\in\mathbb R^{C\times N}\) be a layer output with centred covariance \(\Sigma_X\).
Partition the \(C\) channels into \(k\) equal groups, yielding blocks
\(\{X^{(i)}\}_{i=1}^{k}\).
The CPS descriptor is
\(\mathbf z=\sum_{i=1}^{k} w_i\,\operatorname{vec}_{\text{upper}}\!\bigl(\Sigma_{X^{(i)}}^{1/2}\bigr)\)
with \(w_i\ge 0\) and \(\sum_{i} w_i = 1\).
Hence \(\mathbf z\) preserves exactly the block-diagonal of \(\Sigma_X\)—that is,
all covariances between channels within the same group—while reducing the
dimensionality from \(O\!\bigl(C(C+1)/2\bigr)\) to
\(O\!\bigl((C/k)((C/k)+1)/2\bigr)\).
When \(k = 1\) the full covariance is retained, so no second-order information
is lost; for \(k > 1\) only the off-block entries are discarded, giving a
tunable trade-off between performance and memory.
\end{remark}

\section{Experiments}

We conduct a five-part empirical study on covariance matrix-based SoA for LPR, systematically evaluating its performance and efficiency. The first four investigations are carried out using \textbf{MinkLoc3D} backbone \citep{komorowski2021minkloc3d}: (1) we quantify the aggregation gains of the full covariance matrix over FoA methods; (2) we stress-test it under constrained resource budgets via projection and backbone slimming which show that SoA remains robust even when capacity is severely limited; (3) we evaluate our proposed method \coolname{} across different values of $k \in {2, 4, 8, 16}$, finding that $k{=}2$ yields optimal performance; %
and (4) we compare \coolname{} ($k{=}2$) against four leading SoA variants, \ie{} Bilinear CNN (BCNN) \citep{lin2015bilinear}, iSQRT-COV (iSQRT) \citep{li2017second} and CBP~\citep{gao2016compact}, and \loggnet{} (SoP) ~\citep{vidanapathirana2022logg3d} in terms of size, memory, and accuracy, showing that CPS as the most size-efficient among SoA methods and outperforms CBP by a significant margin. Finally, (5) we integrate \coolname{} ($k{=}2$) into \textbf{MinkLoc3Dv2}, achieving new state-of-the-art results on the Oxford, In-house, MulRan, and WildPlaces datasets. All experiments share a consistent training pipeline and evaluation protocol to ensure fair comparisons. \coolname{} ($k{=}2$) demonstrates the best accuracy-efficiency trade-off to date, operating at just one-quarter the size of the full covariance approach. We use PyTorch and MinkowskiEngine \citep{choy20194d}, and run all experiments on a computing cluster with Nvidia H100 GPUs. 

\paragraph{Datasets and their evaluation criteria:} 
We conduct extensive experiments with four large-scale, public datasets for LiDAR place recognition, namely, \textbf{Oxford Robotcar} \citep{maddern20171},  \textbf{In-house}  \citep{uy2018pointnetvlad}, \textbf{\mulran}~\citep{kim2020mulran} and \textbf{\wildplaces}~\citep{knights2023wild} to demonstrate the effectiveness of \coolname{}. The details of the datasets are given in the Supplementary Material. For Oxford Robotcar and In-house datasets, we use the protocols, training, and testing splits introduced by PointNetVLAD~\cite{uy2018pointnetvlad}. For \mulran, we follow the training and testing splits used in \cite{komorowski2021egonn,vidanapathirana2023spectral}. For \wildplaces{}, we follow the inter-sequence training and testing protocol introduced in \citep{knights2023wild}. We report Recall@1 (R@1) for all datasets, plus Recall@1\% (R@1\%) on Oxford/In-house, Recall@5 (R@5) on MulRan, and Mean Reciprocal Rank (MRR) on WildPlaces. All experiments share this unified protocol to guarantee consistent comparison.

\paragraph{Study 1: Benchmarking full covariance against first-order aggregation:}
We first study the contribution of pure SoA by comparing full-covariance matrix aggregation (for this, we choose iSQRT-COV \citep{li2018towards} due to its wide use in literature. It is worth noting that iSQRT-COV is a special case of \coolname{} with $k=1$. More details on $k$ are in Study 3.) to four popular FoA methods used in LPR, \ie{} SPoC \citep{radenovic2018fine}, MAC \citep{radenovic2018fine}, NetVLAD \citep{arandjelovic2016netvlad}, and GeM \citep{radenovic2018fine}, using the unchanged 256-dimensional (256-d), \ie{} number of output feature channels, MinkLoc3D backbone.  All models are trained on Oxford using the baseline protocol \citep{uy2018pointnetvlad} and evaluated on three In-house regions \citep{uy2018pointnetvlad}.  As shown in the 256-d column of Table~\ref{tab: recall1_baseline_grouped}, on Oxford RobotCar SPoC, MAC, NetVLAD and GeM achieve R@1 of 81.1\%, 92.0\%, 89.6\% and 93.5\%, respectively, while iSQRT-COV raises this to 94.0\%, a 0.5\,pp gain over GeM and a 2.0\,pp gain over MAC.  Importantly, however, this advantage grows markedly in the more challenging In-house benchmarks: University Sector (U.S.)\ improves from 87.2\,\% to 88.4\,\% (+1.2\,pp), Residential Area (R.A.)\ from 79.9\,\% to 85.3\,\% (+5.4\,pp), and Business District (B.D.)\ from 81.6\,\% to 85.0\,\% (+3.4\,pp).  These larger, consistent gains in diverse environments demonstrate that full covariance captures subtle cross-channel interactions that first-order means miss, establishing a clear performance baseline for our subsequent compactness and efficiency studies.

\begin{table*}[t]
\setlength{\tabcolsep}{3.5pt}
\centering
\caption{Comparison of R@1 performance between FoA and SoA methods under reduced backbone output feature dimensionality ($d$), excluding $1{\times}1$ convolutions. Results are grouped by dataset and feature dimension. Reducing channel count decreases the backbone’s parameter size: \textit{16~$\rightarrow$~0.50M, 32~$\rightarrow$~0.51M, 64~$\rightarrow$~0.54M, 128~$\rightarrow$~0.65M, 256~$\rightarrow$~1.06M.}
}
\resizebox{\linewidth}{!}{
\begin{tabular}{lccccc ccccc ccccc ccccc}
\toprule
\multirow{2}{*}{\makecell[c]{\textbf{Aggregation}\\\textbf{Technique}}} & \multicolumn{5}{c}{\textbf{Oxford}} & \multicolumn{5}{c}{\textbf{University Sector (U.S.)}} & \multicolumn{5}{c}{\textbf{Residential Area (R.A.)}} & \multicolumn{5}{c}{\textbf{Business District (B.D.)}} \\ \cmidrule(r){2-6} \cmidrule(r){7-11} \cmidrule(r){12-16} \cmidrule(r){17-21}
 & 16-d & 32-d & 64-d & 128-d & 256-d & 16-d & 32-d & 64-d & 128-d & 256-d & 16-d & 32-d & 64-d & 128-d & 256-d & 16-d & 32-d & 64-d & 128-d & 256-d \\
\midrule
SPoC     & 77.6 & 81.0 & 81.2 & 80.9 & 81.1 & 66.2 & 71.5 & 72.2 & 71.0 & 69.2 & 60.0 & 62.2 & 66.7 & 62.3 & 59.6 & 61.7 & 66.9 & 65.7 & 66.3 & 64.9 \\
MAC     & 1.6  & 73.8 & 83.3 & 89.9 & 92.0 & 2.0  & 55.5 & 70.7 & 79.8 & 85.3 & 2.5  & 41.2 & 61.6 & 73.6 & 82.7 & 0.7  & 53.0 & 67.4 & 77.6 & 83.3 \\
NetVLAD & 74.9 & 85.7 & 89.8 & 90.5 & 89.6 & 61.5 & 80.0 & 81.3 & 82.7 & 81.8 & 52.9 & 70.6 & 77.6 & 79.2 & 75.1 & 57.9 & 71.3 & 75.9 & 81.3 & 75.8 \\
GeM     & 78.9 & 86.7 & 91.8 & 93.0 & 93.5 & 63.7 & 74.0 & 80.1 & 85.2 & 87.2 & 56.6 & 65.2 & 74.9 & 77.6 & 79.9 & 58.5 & 68.2 & 73.6 & 79.7 & 81.6 \\
\rowcolor{green!15}
iSQRT-COV  & \textbf{92.7} & \textbf{93.9} & \textbf{94.2} & \textbf{93.6} & \textbf{94.0} & \textbf{88.7} & \textbf{91.0} & \textbf{88.1} & \textbf{89.8} & \textbf{88.4} & \textbf{84.1} & \textbf{83.9} & \textbf{87.4} & \textbf{86.1} & \textbf{85.3} & \textbf{83.8} & \textbf{84.6} & \textbf{86.4} & \textbf{85.0} & \textbf{85.0} \\

\bottomrule
\end{tabular}}
\label{tab: recall1_baseline_grouped}
\end{table*}

\paragraph{Study 2: Effectiveness of SoA under constrained feature budgets:}
We tested whether SoA retains its effectiveness under tight descriptor budgets via two experiments: one projecting features on the backbone as in literature \citep{li2017second,li2018towards}, the other slimming it down, with all training settings unchanged.

\begin{wrapfigure}[14]{r}{0.40\linewidth}
\vspace{-10pt}
\centering
\includegraphics[width=\linewidth]{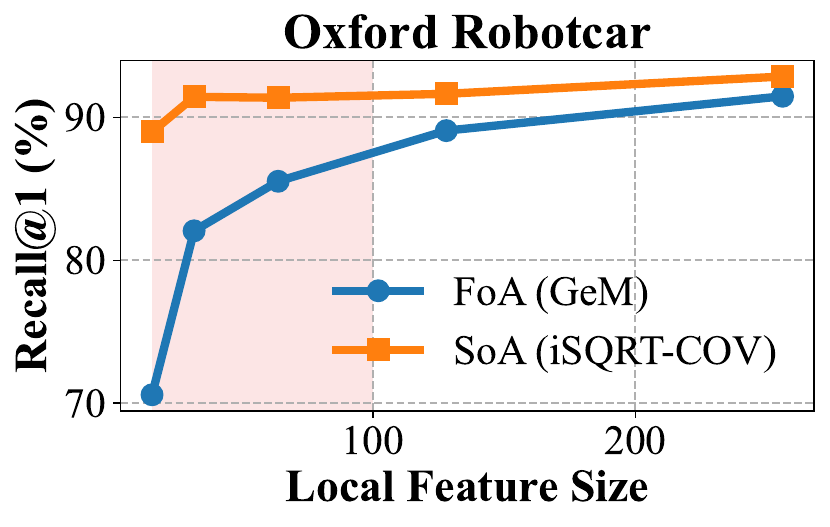}
\caption{Performance of GeM and iSQRT-COV when local feature channels $d$ are reduced with $1\times 1$ convolution to following dimensions: \ie{} $d=\{16, 32, 64, 128, 256\}$.}
\label{fig: performance-vs-descriptor-count}
\vspace{-10pt}
\end{wrapfigure}

In the \textit{projection-based experiment}, we insert a learnable $1\times1$ convolution after MinkLoc3D’s final ResNet block to project the 256-d feature map down to $d=\{16,32,64,128\}$.  As shown by the red-highlighted region in Fig.~\ref{fig: performance-vs-descriptor-count}, when $d=16$ the full covariance method (iSQRT-COV) sustains 89.0\,\% R@1 on Oxford -- only a few points below its full-width performance, whereas GeM collapses to 70.6\,\%, creating an 18.4\,pp gap.  Even at $d=32$, SoA outperforms FoA by over 7\,pp, confirming that covariance statistics pack substantially more discriminative information into each channel than mean-based aggregation. In the \textit{backbone-slimming experiment}, we remove the projection layer entirely and instead reconstruct every convolution in MinkLoc3D so that its \texttt{feature\_size} is set directly to $d$, naturally reducing both the feature dimensionality and the total network parameters.  As reported in Tab.~\ref{tab: recall1_baseline_grouped}, at $d=16$ iSQRT-COV achieves 92.7\,\% R@1 versus 78.9\,\% for GeM, and it maintains a consistent multi-point lead at $d=32$ and $d=64$.  This structural test confirms that the SoA advantage arises from the richer covariance representation rather than from extra projection weights. %

Both stress tests demonstrate that SoA is intrinsically more “bandwidth-efficient” than FoA: it preserves critical cross-channel cues even when the channel count is cut by an order of magnitude or when the entire network is slimmed.  This robustness makes covariance matrix aggregation particularly attractive for resource-constrained robotics platforms, where memory and compute budgets are tight but reliable place recognition remains essential. These findings motivate our next study, where we aim to retain this resilience but with a reduced size covariance descriptor itself, without bells and whistles via the idea of channel partitioning.

\begin{wrapfigure}[19]{r}{0.48\linewidth}
\vspace{-10pt}
\centering
\includegraphics[width=\linewidth]{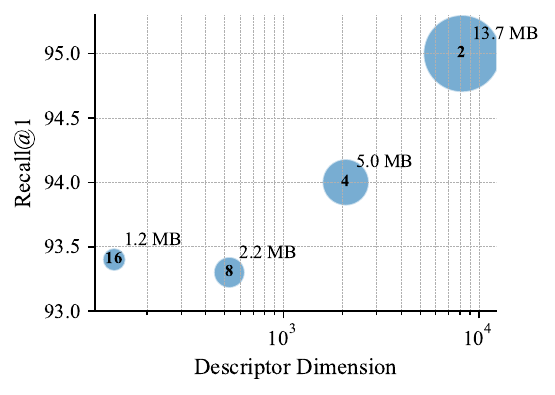}
\caption{Compact visualization of CPS trading off descriptor size and memory (bubble area) across partition sizes $k$ (marked inside bubbles) to maintain high Oxford R@1, even at $k=16$ (136-d, 1.2 MB), achieving 93.4\% recall with over 60x dimensionality reduction and $>$10x lower memory vs. $k=2$.}
\label{fig: performance-vs-dim}
\vspace{-10pt}
\end{wrapfigure}

\paragraph{Study 3: \coolname{} delivers compact global descriptor with minimal loss of accuracy:}
We next examine how the covariance descriptor effectively can be compressed without sacrificing accuracy by sweeping our CPS method over $k=\{2,4,8,16\}$ while keeping the MinkLoc3D backbone fixed with 256-d channels (see Fig.~\ref{fig: performance-vs-dim}). As $k$ shrinks the descriptor size from 8k dimension (\ie{} $k=2$) to only 136 dimension; GPU memory falls from 13.7 MB to 1.2 MB accordingly. Throughout this range Oxford Robotcar R@1 hardly budgets: it sits at 95\% for $k=2$, stays 94\% for $k=4$, even under the most aggressive compression ($k=16$, a 60x reduction relative to the $k=2$), reduces just 0.5 pp performance. The ability to trade two orders of magnitude in memory for only a single-digit percentage point of accuracy positions CPS as a drop-in solution for both high-performance servers and ultra-lightweight robotic platforms.

\begin{wraptable}[12]{r}{0.5\linewidth}
\vspace{-10pt}
\setlength{\tabcolsep}{2pt}
\centering
\caption{Comparison of SoA methods on the Oxford and In-house datasets with baseline protocol. All methods use 256-d local feature channels.}%
\label{tab: soa-pooling-comparisons}
\resizebox{\linewidth}{!}{%
\begin{tabular}{lcccccccc}
\toprule
& \textbf{Dim.} & \makecell[c]{\textbf{Memory}\\\textbf{(MB)}} & \textbf{Oxford} & \textbf{U.S.} & \textbf{R.A.} & \textbf{B.D.} \\
\midrule
BCNN \cite{lin2015bilinear} & 64k & 51.3 & 92.8 & 81.0 & 75.1 & 76.7 \\
iSQRT \cite{li2018towards} & 32k & 25.7 & 94.0 & 88.4 & 85.3 & 85.0 \\
CBP (TS) \cite{gao2016compact} & 8k & 0.44 & 84.2 & 71.2 & 64.2 & 67.0 \\
SoP \cite{vidanapathirana2022logg3d} & 64k & 61.04 & 87.1 & 73.7 & 64.9 & 69.3 \\
\midrule
\rowcolor{green!15}
\textbf{\coolname{}} ($k=2$) & 8k & 13.7 & \textbf{95.0} & \textbf{90.1} & \textbf{86.2} & \textbf{85.0} \\
\bottomrule
\end{tabular}
}
\vspace{-10pt}
\end{wraptable}

\paragraph{Study 4: Comparison of CPS with established SoA methods:}
We now position our compact configuration, \textit{CPS ($k=2$)}, against four widely used second-order alternatives, from image domain, BCNN \citep{lin2015bilinear}, iSQRT-COV \citep{li2017second} and CBP \citep{gao2016compact}, and from LPR, \loggnet{}'s SoP \citep{vidanapathirana2022logg3d} using the same MinkLoc3D backbone and training pipeline as above. Note that the SoA methods from the image domain are being evaluated in LPR for the first time. We follow their original implementation to reimplement them within the MinkLoc3D codebase. The results, shown in the Tab.~\ref{tab: soa-pooling-comparisons}, contrast each method’s descriptor dimensionality, peak GPU scratch memory, and R@1 on Oxford and the three In-house regions.  BCNN and SoP deliver competitive accuracy but emit 65k dimensional descriptors and demand over 50MB of memory. iSQRT-COV halves that footprint to 32k dimension yet still occupies 25MB.  CBP is far more compact (8k dimensions, $<$1MB) but loses more than ten percentage points of R@1 on every split. \textit{CPS ($k=2$) combines the best of both worlds}: at 8k dimension it matches CBP’s compactness while restoring full-covariance accuracy, achieving 95.0\% R@1 on Oxford and leading every baseline on the In-house regions.  These results establish CPS ($k=2$) as the most size-efficient and, to the best of our knowledge, highest-performing SoA method. %
In Fig. \ref{fig: top_fig}, we compare the \coolname{} performance on \wildplaces{} dataset, where in addition to SoA methods, we include a comparison with a self-attention-based global feature aggregation approach \citep{lee2019set} since it implicitly models feature correlations. The results demonstrate the superiority of CPS (across $k=2,4,8$) over the self-attention mechanism.

\paragraph{Study 5: Backbone-agnostic integration of \coolname{} ($k=2$):}
To verify that the benefits of \coolname{} ($k=2$) generalize beyond a single encoder, we replace the GeM pooling layer in MinkLoc3Dv2 \citep{komorowski2022improving} with \coolname{} ($k=2$), leaving all other architectural components, training schedules, and the AP loss \citep{brown2020smooth} unchanged. This swap isolates the impact of the pooling module on a high-capacity backbone. 

On Oxford and the three In-house regions, Table~\ref{tab:oxford_baseline_refined_results} shows that \coolname{} ($k=2$) + MinkLoc3Dv2 achieves the highest R@1 and R@1\% in 14 of 16 metric–dataset pairs, outperforming the GeM variant by up to 1.3\,pp and surpassing all prior methods, including TransLoc3D and SVT-Net.  Given that these scans are downsampled to 4096 points, the gains confirm  \coolname{} ($k=2$)’s ability to extract richer descriptors under sparse input conditions. On the MulRan benchmark (Table~\ref{tab: mulran-results}),  \coolname{} ($k=2$) raises R@1 on Sejong from 73.1\% (MinkLoc3Dv2 + GeM) to 91.9\%, and on the unseen DCC city from 62.2\% to 68.6\%, with R@5 reaching 97.4\% and 91.6\% respectively.  These substantial improvements in an unseen environment highlight  \coolname{} ($k=2$)’s strong generalization capacity in complex urban scenarios. Finally, on WildPlaces (Table~\ref{tab: wild-places results}),  \coolname{} ($k=2$) attains 80.22\% R@1 on Venman and 75.04\% on Karawatha, surpassing the previous best \loggnet{} even while using the standard AP loss and training twice as fast on a single GPU.

These results demonstrate that \coolname{} is \emph{backbone-agnostic}: by swapping a single pooling layer, we elevate MinkLoc3Dv2 to state-of-the-art performance across four diverse datasets, urban, campus, mixed-traffic, and natural trails, without additional tuning. Due to space limitations, we give qualitative results in the Supplementary Materials that include feature visualizations.

\begin{table*}[t]
\setlength{\tabcolsep}{1pt}
\centering
\caption{Evaluation results of Place Recognition methods on Oxford and In-house datasets using the Baseline and Refined protocols from \cite{uy2018pointnetvlad}. R@1: Recall@1; R@1\%: Recall@1\%.}
\label{tab:oxford_baseline_refined_results}
\resizebox{\linewidth}{!}{%
\begin{tabular}{%
  l  %
  l  %
  *{8}{c}  %
  *{8}{c}  %
}
\toprule
\multirow{4}{*}{\textbf{LPR Method}} 
  & \multirow{4}{*}{\textbf{Agg.}}
  & \multicolumn{8}{c}{\textbf{Baseline protocol}}
  & \multicolumn{8}{c}{\textbf{Refined protocol}} \\
\cmidrule(lr){3-10} \cmidrule(lr){11-18}
  & 
  & \multicolumn{2}{c}{\textbf{Oxford}}
  & \multicolumn{2}{c}{\textbf{U.S.}}
  & \multicolumn{2}{c}{\textbf{R.A.}}
  & \multicolumn{2}{c}{\textbf{B.D.}}
  & \multicolumn{2}{c}{\textbf{Oxford}}
  & \multicolumn{2}{c}{\textbf{U.S.}}
  & \multicolumn{2}{c}{\textbf{R.A.}}
  & \multicolumn{2}{c}{\textbf{B.D.}} \\
\cmidrule(lr){3-4} \cmidrule(lr){5-6} \cmidrule(lr){7-8} \cmidrule(lr){9-10}
\cmidrule(lr){11-12} \cmidrule(lr){13-14} \cmidrule(lr){15-16} \cmidrule(lr){17-18}
  & 
  & \textbf{R@1} & \textbf{R@1\%}
  & \textbf{R@1} & \textbf{R@1\%}
  & \textbf{R@1} & \textbf{R@1\%}
  & \textbf{R@1} & \textbf{R@1\%}
  & \textbf{R@1} & \textbf{R@1\%}
  & \textbf{R@1} & \textbf{R@1\%}
  & \textbf{R@1} & \textbf{R@1\%}
  & \textbf{R@1} & \textbf{R@1\%} \\
\midrule
PointNetVLAD \cite{uy2018pointnetvlad} & \netvlad
  & 62.8 & 83.0 & 63.2 & 72.6 & 56.1 & 60.3 & 57.2 & 65.3
  & 63.3 & 80.1 & 86.1 & 94.5 & 82.7 & 93.1 & 80.1 & 86.5 \\

PCAN \cite{zhang2019pcan}           & \netvlad
  & 70.3 & 86.4 & 73.7 & 89.1 & 58.1 & 69.1 & 66.8 & 75.2
  & 70.7 & 86.4 & 83.7 & 94.1 & 82.5 & 92.5 & 80.3 & 87.0 \\

LPD-Net \cite{liu2019lpd}           & \netvlad
  & 86.3 & 94.3 & 87.0 & 96.0 & 79.1 & 85.7 & 82.5 & 89.1
  & 86.6 & 94.9 & 94.4 & 98.9 & 90.8 & 96.4 & 90.8 & 94.4 \\

SOE-Net \cite{xia2021soe}           & \netvlad
  & 89.4 & 96.5 & 93.7 & 96.7 & 90.2 & 92.4 & --   & --
  & 89.3 & 96.4 & 91.8 & 97.7 & 90.2 & 95.9 & 89.0 & 92.6 \\

MinkLoc3D \cite{komorowski2021minkloc3d} & GeM
  & 93.0 & 97.4 & 86.7 & 97.5 & 90.4 & 91.5 & 81.5 & 90.1
  & 94.8 & 98.5 & 97.2 & 99.7 & 96.7 & 99.3 & 94.0 & 96.7 \\

PPT-Net \cite{zhou2021ndt}          & GeM
  & 93.5 & 97.8 & 90.1 & 97.5 & 89.7 & 95.1 & 88.4 & 90.7
  & --   & 98.4 & --   & 99.7 & --   & 99.5 & --   & 95.3 \\

SVT-Net \cite{fan2022svt}           & GeM
  & 95.0 & 98.5 & 94.9 & 97.4 & 90.4 & 94.9 & 89.3 & 93.3
  & 94.7 & 98.4 & 97.0 & 99.9 & 95.2 & 99.5 & 94.4 & 97.2 \\

TransLoc3D \cite{xu2021transloc3d}   & \netvlad
  & 95.0 & 98.3 & \textbf{95.4} & 97.6 & \textbf{91.0} & 94.7 & 88.4 & 94.7
  & 95.0 & 98.5 & 97.5 & 99.8 & 94.4 & 99.7 & 94.8 & 97.4 \\

MinkLoc3Dv2 \cite{komorowski2022improving} & GeM
  & 95.8 & 98.7 & 90.0 & 96.5 & 84.9 & 91.7 & 84.9 & 89.7
  & 96.8 & 98.9 & 98.9 & 100  & 98.4 & 99.7 & 97.2 & 98.7 \\
\midrule

\rowcolor{green!15}
\textbf{MinkLoc3Dv2}                     & \coolname{}
  & \textbf{97.4} & \textbf{99.2} & 93.5 & \textbf{98.0} & 89.5 & \textbf{94.9} & \textbf{89.8} & \textbf{93.9}
  & \textbf{98.1} & \textbf{99.3} & \textbf{98.9} & \textbf{99.9} & \textbf{99.1} & \textbf{99.7} & \textbf{98.8} & \textbf{99.6}\\
\bottomrule
\end{tabular}}%
\end{table*}

\begin{table}[t]
  \centering
  \begin{minipage}[t]{0.48\linewidth}
    \setlength{\tabcolsep}{4pt}
    \renewcommand{\arraystretch}{0.925}
    \centering
    \caption{Evaluation Results with 5m thresholds on Sejong and DCC environments of MulRan dataset using the training and testing protocols proposed in [43]. The results of existing methods are quoted from \citep{komorowski2021egonn}. R@1/5: Recall@1/5.}
    \label{tab: mulran-results}
    \resizebox{\linewidth}{!}{%
      \begin{tabular}{l l c c c c}
        \toprule
        \multirow{2}{*}{\textbf{Method}}  
          & \multirow{2}{*}{\textbf{Agg.}}  
          & \multicolumn{2}{c}{\textbf{Sejong}} 
          & \multicolumn{2}{c}{\textbf{DCC}} \\ 
        \cmidrule(lr){3-4}\cmidrule(lr){5-6}
         &  & \textbf{R@1} & \textbf{R@5}   
               & \textbf{R@1} & \textbf{R@5} \\ 
        \midrule
        Locus    \cite{vidanapathirana2021locus} 
          & SoP  & 67.0 & 75.8 & 46.3 & 55.6 \\
        PPT-Net  \cite{zhou2021ndt} 
          & GeM  & 60.2 & 76.2 & 47.9 & 60.2 \\
        MinkLoc3Dv2 \cite{komorowski2022improving} 
          & GeM  & 73.1 & 86.4 & 62.2 & 73.2 \\
        LCD-Net  \cite{cattaneo2022lcdnet} 
          & \netvlad 
                & 63.1 & 82.0 & 57.7 & 71.6 \\
        \midrule
        \rowcolor{green!15}
        \textbf{MinkLoc3Dv2} 
          & \coolname{} 
                & \textbf{91.9} & \textbf{97.4} 
                & \textbf{68.6} & \textbf{91.6} \\ 
        \bottomrule
      \end{tabular}%
    }
  \end{minipage}\hfill
  \begin{minipage}[t]{0.49\linewidth}
    \setlength{\tabcolsep}{4.2pt}
    \renewcommand{\arraystretch}{0.95}
    \centering
    \caption{Evaluation results on the WildPlaces dataset using the training and testing protocols proposed in the dataset \citep{knights2023wild}. The results of existing methods are quoted from \citep{knights2023wild}. R@1: Recall@1; MRR: Mean Reciprocal Rank.}
    \label{tab: wild-places results}
    \resizebox{\linewidth}{!}{%
      \begin{tabular}{l l c c c c}
        \toprule
        \multirow{2}{*}{\textbf{Method}}  
          & \multirow{2}{*}{\textbf{Agg.}}  
          & \multicolumn{2}{c}{\textbf{Venman}} 
          & \multicolumn{2}{c}{\textbf{Karawatha}} \\ 
        \cmidrule(lr){3-4}\cmidrule(lr){5-6}
        &  & \textbf{R@1}  & \textbf{MRR}   
              & \textbf{R@1}  & \textbf{MRR}\\ 
        \midrule
        ScanContext \cite{kim2018scan} 
          & --     & 33.98 & 64.67 & 38.44 & 67.90 \\ 
        TransLoc3D \cite{xu2021transloc3d} 
          & \netvlad{} 
                   & 50.24 & 66.16 & 46.08 & 50.24 \\ 
        MinkLoc3Dv2 \cite{komorowski2022improving} 
          & GeM    & 75.77 & 84.87 & 67.82 & 79.21 \\ 
        LoGG3D-Net \cite{vidanapathirana2022logg3d} 
          & SoP    & 79.84 & 87.33 & 74.67 & 83.68 \\ 
        \midrule
        \rowcolor{green!15}
        \textbf{MinkLoc3Dv2}  
          & \coolname{} 
                   & \textbf{80.22} & \textbf{87.36} 
                   & \textbf{75.04} & \textbf{83.73} \\
        \bottomrule
      \end{tabular}%
    }
  \end{minipage}
\end{table}

\section{Conclusion}
Our five-part study shows that second-order aggregation is the most bandwidth-efficient way to build global LiDAR descriptors: full covariance already outperforms strong first-order baselines, yet remains remarkably robust when the feature budget is cut by an order of magnitude. Channel-Partitioned SoA (\coolname{}) preserves that robustness while reducing the global descriptor dimension by 4–16 times; a single setting \coolname{} $(k=2)$ emerges as the sweet-spot, matching full covariance accuracy with only 8k dimensions and 13 MB of scratch memory. \coolname{} $(k=2)$ generalizes seamlessly from lightweight MinkLoc3D to the high-capacity MinkLoc3Dv2 backbone, lifting it to new state-of-the-art results on four benchmarks spanning urban streets, campus routes, complex city loops, and dense forest trails. Because adopting \coolname{} requires adjusting only the pooling layer, these gains come with negligible architecture variations overhead. %
In the future, we plan to integrate \coolname{} with transformer backbones to achieve even higher performance and test on other LPR datasets. We also plan to test it on resource constrained systems.

\clearpage
\acknowledgments{This work was partially funded by the CSIRO's Machine Learning and Artificial Intelligence (MLAI) FSP and CSIRO's Data61 Science Digital.}

\section*{Limitation}

While \coolname{} remains, to our knowledge, the state-of-the-art second-order aggregation at its dimensionality among methods that fully exploit every backbone channel, below are its limitations.

Because CPS computes covariances only within each channel partition, information about relationships that span different groups is omitted. In practice, this manifests as a modest accuracy penalty: on Oxford Robotcar, raising the partition count from $k=2$ to $k=4$ reduces R@1 by roughly one percentage point. The drop is small because most discriminative structure is preserved inside groups, yet it signals that CPS cannot fully match the representational richness of a complete covariance matrix when very fine-grained cross-channel cues are essential. 

CPS’s power-normalization step depends on the Newton–Schulz iteration for a matrix square root. With the 256-channel backbones used in our experiments, three to five iterations are ample and add only very small time. However, if the backbone grows wider or partitions become very small, the covariance blocks enlarge and additional iterations may be necessary, modestly increasing compute and memory traffic. This iteration overhead is therefore a secondary resource consideration when deploying CPS on extremely constrained processors.

Selecting the partition count $k$ is an extra hyperparameter step. Optimal values shift with backbone width and dataset complexity; $k=2$ is best for our 256-channel models, whereas a slimmer network or a noisier environment may favor $k=4$. Although tuning requires only a small validation sweep, it adds a layer of configuration effort compared with fixed first-order aggregators.

CPS, like any covariance-based method, benefits from having a few points per channel to estimate stable second-order statistics. When scans become exceptionally sparse or heavily corrupted, well below the 4 k-point subsample (\ie{} Oxford and In-house) used in our tests -- the covariance estimate can degrade, and the accuracy margin over first-order aggregation narrows. In practice, this scenario is rare for modern rotating-LiDAR platforms, and simple counter-measures such as merging two partitions or keeping $k$ low restore stability with negligible memory cost.

\bibliography{references}  %

\begin{thebibliography}{46}
\providecommand{\natexlab}[1]{#1}
\providecommand{\url}[1]{\texttt{#1}}
\expandafter\ifx\csname urlstyle\endcsname\relax
  \providecommand{\doi}[1]{doi: #1}\else
  \providecommand{\doi}{doi: \begingroup \urlstyle{rm}\Url}\fi

\bibitem[Komorowski(2021)]{komorowski2021minkloc3d}
J.~Komorowski.
\newblock Minkloc3d: Point cloud based large-scale place recognition.
\newblock In \emph{WACV}, pages 1790--1799, 2021.

\bibitem[Vidanapathirana et~al.(2022)Vidanapathirana, Ramezani, Moghadam, Sridharan, and Fookes]{vidanapathirana2022logg3d}
K.~Vidanapathirana, M.~Ramezani, P.~Moghadam, S.~Sridharan, and C.~Fookes.
\newblock {LoGG3D-Net: Locally guided global descriptor learning for 3D place recognition}.
\newblock In \emph{ICRA}, pages 2215--2221. IEEE, 2022.

\bibitem[Komorowski et~al.(2021)Komorowski, Wysoczanska, and Trzcinski]{komorowski2021egonn}
J.~Komorowski, M.~Wysoczanska, and T.~Trzcinski.
\newblock Egonn: Egocentric neural network for point cloud based 6dof relocalization at the city scale.
\newblock \emph{IEEE Robotics and Automation Letters}, 7\penalty0 (2):\penalty0 722--729, 2021.

\bibitem[Vidanapathirana et~al.(2023)Vidanapathirana, Moghadam, Sridharan, and Fookes]{vidanapathirana2023spectral}
K.~Vidanapathirana, P.~Moghadam, S.~Sridharan, and C.~Fookes.
\newblock {Spectral Geometric Verification: Re-Ranking Point Cloud Retrieval for Metric Localization}.
\newblock \emph{IEEE Robotics and Automation Letters}, 8\penalty0 (5):\penalty0 2494--2501, 2023.

\bibitem[Uy and Lee(2018)]{uy2018pointnetvlad}
M.~A. Uy and G.~H. Lee.
\newblock Pointnetvlad: Deep point cloud based retrieval for large-scale place recognition.
\newblock In \emph{CVPR}, pages 4470--4479, 2018.

\bibitem[Zhang et~al.(2022)Zhang, Wang, Murray, and Koniusz]{zhang2022kernelized}
S.~Zhang, L.~Wang, N.~Murray, and P.~Koniusz.
\newblock Kernelized few-shot object detection with efficient integral aggregation.
\newblock In \emph{CVPR}, pages 19207--19216, 2022.

\bibitem[Komorowski(2022)]{komorowski2022improving}
J.~Komorowski.
\newblock Improving point cloud based place recognition with ranking-based loss and large batch training.
\newblock In \emph{2022 26th International Conference on Pattern Recognition (ICPR)}, pages 3699--3705. IEEE, 2022.

\bibitem[Zhang and Xiao(2019)]{zhang2019pcan}
W.~Zhang and C.~Xiao.
\newblock Pcan: 3d attention map learning using contextual information for point cloud based retrieval.
\newblock In \emph{Proceedings of the IEEE/CVF Conference on Computer Vision and Pattern Recognition}, pages 12436--12445, 2019.

\bibitem[Liu et~al.(2019)Liu, Zhou, Suo, Yin, Chen, Wang, Li, and Liu]{liu2019lpd}
Z.~Liu, S.~Zhou, C.~Suo, P.~Yin, W.~Chen, H.~Wang, H.~Li, and Y.-H. Liu.
\newblock Lpd-net: 3d point cloud learning for large-scale place recognition and environment analysis.
\newblock In \emph{Proceedings of the IEEE/CVF International Conference on Computer Vision}, pages 2831--2840, 2019.

\bibitem[Zhou et~al.(2021)Zhou, Zhao, Adolfsson, Su, Gao, Duckett, and Sun]{zhou2021ndt}
Z.~Zhou, C.~Zhao, D.~Adolfsson, S.~Su, Y.~Gao, T.~Duckett, and L.~Sun.
\newblock Ndt-transformer: Large-scale 3d point cloud localisation using the normal distribution transform representation.
\newblock In \emph{2021 IEEE International Conference on Robotics and Automation (ICRA)}, pages 5654--5660. IEEE, 2021.

\bibitem[Vidanapathirana et~al.(2021)Vidanapathirana, Moghadam, Harwood, Zhao, Sridharan, and Fookes]{vidanapathirana2021locus}
K.~Vidanapathirana, P.~Moghadam, B.~Harwood, M.~Zhao, S.~Sridharan, and C.~Fookes.
\newblock Locus: Lidar-based place recognition using spatiotemporal higher-order pooling.
\newblock In \emph{ICRA}, pages 5075--5081. IEEE, 2021.

\bibitem[Radenovi{\'c} et~al.(2018)Radenovi{\'c}, Tolias, and Chum]{radenovic2018fine}
F.~Radenovi{\'c}, G.~Tolias, and O.~Chum.
\newblock Fine-tuning cnn image retrieval with no human annotation.
\newblock \emph{T-PAMI}, 41\penalty0 (7):\penalty0 1655--1668, 2018.

\bibitem[Arandjelovic et~al.(2016)Arandjelovic, Gronat, Torii, Pajdla, and Sivic]{arandjelovic2016netvlad}
R.~Arandjelovic, P.~Gronat, A.~Torii, T.~Pajdla, and J.~Sivic.
\newblock Netvlad: Cnn architecture for weakly supervised place recognition.
\newblock In \emph{CVPR}, pages 5297--5307, 2016.

\bibitem[Li et~al.(2017)Li, Wang, Liu, and Hou]{li2017factorized}
Y.~Li, N.~Wang, J.~Liu, and X.~Hou.
\newblock Factorized bilinear models for image recognition.
\newblock In \emph{Proceedings of the IEEE international conference on computer vision}, pages 2079--2087, 2017.

\bibitem[Maddern et~al.(2017)Maddern, Pascoe, Linegar, and Newman]{maddern20171}
W.~Maddern, G.~Pascoe, C.~Linegar, and P.~Newman.
\newblock 1 year, 1000 km: The oxford robotcar dataset.
\newblock \emph{IJRR}, 36\penalty0 (1):\penalty0 3--15, 2017.

\bibitem[Kim et~al.(2020)Kim, Park, Cho, Jeong, and Kim]{kim2020mulran}
G.~Kim, Y.~S. Park, Y.~Cho, J.~Jeong, and A.~Kim.
\newblock Mulran: Multimodal range dataset for urban place recognition.
\newblock In \emph{ICRA}, pages 6246--6253. IEEE, 2020.

\bibitem[Knights et~al.(2023)Knights, Vidanapathirana, Ramezani, Sridharan, Fookes, and Moghadam]{knights2023wild}
J.~Knights, K.~Vidanapathirana, M.~Ramezani, S.~Sridharan, C.~Fookes, and P.~Moghadam.
\newblock {Wild-Places: A Large-Scale Dataset for Lidar Place Recognition in Unstructured Natural Environments}.
\newblock In \emph{2023 IEEE International Conference on Robotics and Automation (ICRA)}, pages 11322--11328, 2023.

\bibitem[Gao et~al.(2016)Gao, Beijbom, Zhang, and Darrell]{gao2016compact}
Y.~Gao, O.~Beijbom, N.~Zhang, and T.~Darrell.
\newblock Compact bilinear pooling.
\newblock In \emph{CVPR}, pages 317--326, 2016.

\bibitem[Li et~al.(2018)Li, Xie, Wang, and Gao]{li2018towards}
P.~Li, J.~Xie, Q.~Wang, and Z.~Gao.
\newblock Towards faster training of global covariance pooling networks by iterative matrix square root normalization.
\newblock In \emph{CVPR}, pages 947--955, 2018.

\bibitem[Xia et~al.(2021)Xia, Xu, Li, Wang, Du, Cremers, and Stilla]{xia2021soe}
Y.~Xia, Y.~Xu, S.~Li, R.~Wang, J.~Du, D.~Cremers, and U.~Stilla.
\newblock Soe-net: A self-attention and orientation encoding network for point cloud based place recognition.
\newblock In \emph{Proceedings of the IEEE/CVF Conference on computer vision and pattern recognition}, pages 11348--11357, 2021.

\bibitem[Xu et~al.(2021)Xu, Guo, Li, Yu, Lai, and Zhang]{xu2021transloc3d}
T.-X. Xu, Y.-C. Guo, Z.~Li, G.~Yu, Y.-K. Lai, and S.-H. Zhang.
\newblock Transloc3d: Point cloud based large-scale place recognition using adaptive receptive fields.
\newblock \emph{arXiv preprint arXiv:2105.11605}, 2021.

\bibitem[Wiesmann et~al.(2022{\natexlab{a}})Wiesmann, Marcuzzi, Stachniss, and Behley]{wiesmann2022retriever}
L.~Wiesmann, R.~Marcuzzi, C.~Stachniss, and J.~Behley.
\newblock Retriever: Point cloud retrieval in compressed 3d maps.
\newblock In \emph{ICRA}, pages 10925--10932. IEEE, 2022{\natexlab{a}}.

\bibitem[Wiesmann et~al.(2022{\natexlab{b}})Wiesmann, Nunes, Behley, and Stachniss]{wiesmann2022kppr}
L.~Wiesmann, L.~Nunes, J.~Behley, and C.~Stachniss.
\newblock Kppr: Exploiting momentum contrast for point cloud-based place recognition.
\newblock \emph{IEEE Robotics and Automation Letters}, 8\penalty0 (2):\penalty0 592--599, 2022{\natexlab{b}}.

\bibitem[Lin et~al.(2023)Lin, Song, Zhang, Zhu, and Ghaffari]{lin2023se}
C.~E. Lin, J.~Song, R.~Zhang, M.~Zhu, and M.~Ghaffari.
\newblock Se (3)-equivariant point cloud-based place recognition.
\newblock In \emph{Conference on Robot Learning}, pages 1520--1530. PMLR, 2023.

\bibitem[Qi et~al.(2017)Qi, Su, Mo, and Guibas]{qi2017pointnet}
C.~R. Qi, H.~Su, K.~Mo, and L.~J. Guibas.
\newblock Pointnet: Deep learning on point sets for 3d classification and segmentation.
\newblock In \emph{CVPR}, pages 652--660, 2017.

\bibitem[Knights et~al.(2024)Knights, Hausler, Sridharan, Fookes, and Moghadam]{knights2023geoadapt}
J.~Knights, S.~Hausler, S.~Sridharan, C.~Fookes, and P.~Moghadam.
\newblock {GeoAdapt}: Self-supervised test-time adaptation in lidar place recognition using geometric priors.
\newblock \emph{IEEE Robotics and Automation Letters}, 9\penalty0 (1):\penalty0 915--922, 2024.

\bibitem[Barros et~al.(2022)Barros, Garrote, Pereira, Premebida, and Nunes]{barros2022attdlnet}
T.~Barros, L.~Garrote, R.~Pereira, C.~Premebida, and U.~J. Nunes.
\newblock Attdlnet: Attention-based deep network for 3d lidar place recognition.
\newblock In \emph{Iberian Robotics conference}, pages 309--320. Springer, 2022.

\bibitem[Lai et~al.(2022)Lai, Yin, and Scherer]{lai2022adafusion}
H.~Lai, P.~Yin, and S.~Scherer.
\newblock Adafusion: Visual-lidar fusion with adaptive weights for place recognition.
\newblock \emph{IEEE Robotics and Automation Letters}, 7\penalty0 (4):\penalty0 12038--12045, 2022.

\bibitem[Hausler et~al.(2021)Hausler, Garg, Xu, Milford, and Fischer]{hausler2021patch}
S.~Hausler, S.~Garg, M.~Xu, M.~Milford, and T.~Fischer.
\newblock Patch-netvlad: Multi-scale fusion of locally-global descriptors for place recognition.
\newblock In \emph{Proceedings of the IEEE/CVF conference on computer vision and pattern recognition}, pages 14141--14152, 2021.

\bibitem[Ali-Bey et~al.(2023)Ali-Bey, Chaib-Draa, and Giguere]{ali2023mixvpr}
A.~Ali-Bey, B.~Chaib-Draa, and P.~Giguere.
\newblock Mixvpr: Feature mixing for visual place recognition.
\newblock In \emph{Proceedings of the IEEE/CVF winter conference on applications of computer vision}, pages 2998--3007, 2023.

\bibitem[Izquierdo and Civera(2024)]{izquierdo2024optimal}
S.~Izquierdo and J.~Civera.
\newblock Optimal transport aggregation for visual place recognition.
\newblock In \emph{Proceedings of the IEEE/CVF Conference on Computer Vision and Pattern Recognition}, pages 17658--17668, 2024.

\bibitem[Koniusz et~al.(2018)Koniusz, Zhang, and Porikli]{koniusz2018deeper}
P.~Koniusz, H.~Zhang, and F.~Porikli.
\newblock A deeper look at power normalizations.
\newblock In \emph{CVPR}, pages 5774--5783, 2018.

\bibitem[Koniusz and Zhang(2021)]{koniusz2021power}
P.~Koniusz and H.~Zhang.
\newblock Power normalizations in fine-grained image, few-shot image and graph classification.
\newblock \emph{IEEE Transactions on Pattern Analysis and Machine Intelligence}, 44\penalty0 (2):\penalty0 591--609, 2021.

\bibitem[Lin et~al.(2015)Lin, RoyChowdhury, and Maji]{lin2015bilinear}
T.-Y. Lin, A.~RoyChowdhury, and S.~Maji.
\newblock Bilinear cnn models for fine-grained visual recognition.
\newblock In \emph{Proceedings of the IEEE international conference on computer vision}, pages 1449--1457, 2015.

\bibitem[Lin and Maji(2017)]{lin2017improved}
T.-Y. Lin and S.~Maji.
\newblock Improved bilinear pooling with cnns.
\newblock \emph{arXiv preprint arXiv:1707.06772}, 2017.

\bibitem[Li et~al.(2017)Li, Xie, Wang, and Zuo]{li2017second}
P.~Li, J.~Xie, Q.~Wang, and W.~Zuo.
\newblock Is second-order information helpful for large-scale visual recognition?
\newblock In \emph{CVPR}, pages 2070--2078, 2017.

\bibitem[Engin et~al.(2018)Engin, Wang, Zhou, and Liu]{engin2018deepkspd}
M.~Engin, L.~Wang, L.~Zhou, and X.~Liu.
\newblock Deepkspd: Learning kernel-matrix-based spd representation for fine-grained image recognition.
\newblock In \emph{Proceedings of the European Conference on Computer Vision (ECCV)}, pages 612--627, 2018.

\bibitem[Lin et~al.(2018)Lin, Maji, and Koniusz]{lin2018second}
T.-Y. Lin, S.~Maji, and P.~Koniusz.
\newblock Second-order democratic aggregation.
\newblock In \emph{Proceedings of the European Conference on Computer Vision (ECCV)}, pages 620--636, 2018.

\bibitem[Koniusz et~al.(2021)Koniusz, Wang, and Cherian]{koniusz2021tensor}
P.~Koniusz, L.~Wang, and A.~Cherian.
\newblock Tensor representations for action recognition.
\newblock \emph{T-PAMI}, 44\penalty0 (2):\penalty0 648--665, 2021.

\bibitem[Ionescu et~al.(2015)Ionescu, Vantzos, and Sminchisescu]{ionescu2015matrix}
C.~Ionescu, O.~Vantzos, and C.~Sminchisescu.
\newblock Matrix backpropagation for deep networks with structured layers.
\newblock In \emph{Proceedings of the IEEE international conference on computer vision}, pages 2965--2973, 2015.

\bibitem[Choy et~al.(2019)Choy, Gwak, and Savarese]{choy20194d}
C.~Choy, J.~Gwak, and S.~Savarese.
\newblock 4d spatio-temporal convnets: Minkowski convolutional neural networks.
\newblock In \emph{Proceedings of the IEEE/CVF conference on computer vision and pattern recognition}, pages 3075--3084, 2019.

\bibitem[Lee et~al.(2019)Lee, Lee, Kim, Kosiorek, Choi, and Teh]{lee2019set}
J.~Lee, Y.~Lee, J.~Kim, A.~Kosiorek, S.~Choi, and Y.~W. Teh.
\newblock Set transformer: A framework for attention-based permutation-invariant neural networks.
\newblock In \emph{International conference on machine learning}, pages 3744--3753. PMLR, 2019.

\bibitem[Brown et~al.(2020)Brown, Xie, Kalogeiton, and Zisserman]{brown2020smooth}
A.~Brown, W.~Xie, V.~Kalogeiton, and A.~Zisserman.
\newblock Smooth-ap: Smoothing the path towards large-scale image retrieval.
\newblock In \emph{European conference on computer vision}, pages 677--694. Springer, 2020.

\bibitem[Fan et~al.(2022)Fan, Song, Liu, Lu, He, and Du]{fan2022svt}
Z.~Fan, Z.~Song, H.~Liu, Z.~Lu, J.~He, and X.~Du.
\newblock Svt-net: Super light-weight sparse voxel transformer for large scale place recognition.
\newblock In \emph{Proceedings of the AAAI Conference on Artificial Intelligence}, volume~36, pages 551--560, 2022.

\bibitem[Cattaneo et~al.(2022)Cattaneo, Vaghi, and Valada]{cattaneo2022lcdnet}
D.~Cattaneo, M.~Vaghi, and A.~Valada.
\newblock Lcdnet: Deep loop closure detection and point cloud registration for lidar slam.
\newblock \emph{IEEE Transactions on Robotics}, 38\penalty0 (4):\penalty0 2074--2093, 2022.

\bibitem[Kim and Kim(2018)]{kim2018scan}
G.~Kim and A.~Kim.
\newblock Scan context: Egocentric spatial descriptor for place recognition within 3d point cloud map.
\newblock In \emph{IROS}, pages 4802--4809. IEEE, 2018.

\end{thebibliography}

\clearpage

\begin{nolinenumbers}
\begin{center}
    \LARGE \textbf{Supplementary Material} \\[0.5em]
    A Deeper Look into Second-Order Feature Aggregation for LiDAR Place Recognition
\end{center}
\end{nolinenumbers}

\setcounter{linenumber}{1}
\setcounter{page}{1}
\setcounter{section}{0}

\section{Evaluation of Kernel Matrix based Second-order Feature Aggregation}

In this section, we provide the evaluation of kernel matrix based second-order feature aggregation \citep{zhang2022kernelized} mentioned in line 113 of the main text. 

\begin{wrapfigure}[14]{r}{0.40\linewidth}
\vspace{-12pt}
\centering
\includegraphics[width=\linewidth]{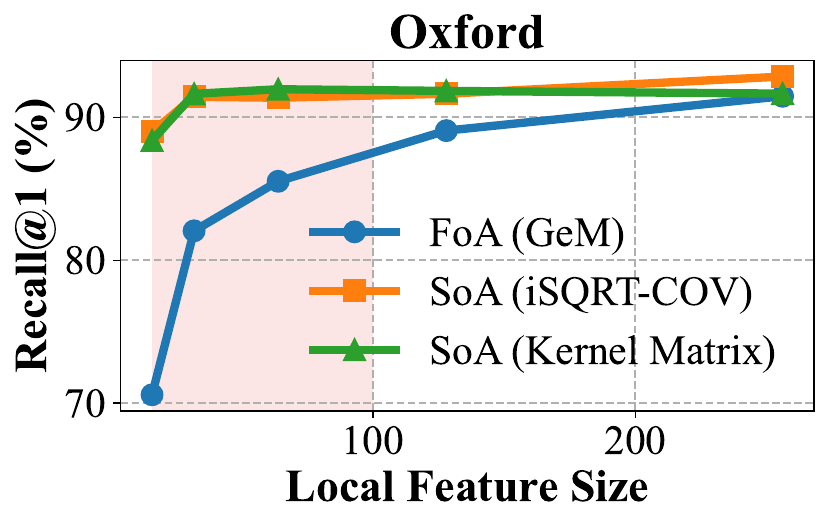}
\caption{Performance of GeM, iSQRT-COV, Kernel Matrix when local feature channels $d$ are reduced with $1\times 1$ convolution to following dimensions: \ie{} $d=\{16, 32, 64, 128, 256\}$.}
\label{fig: kernel matrix comparison}
\vspace{-10pt}
\end{wrapfigure}

For consistency of understanding, we compare kernel matrix aggregation with GeM and iSQRT-COV performance shown Fig. \ref{fig: performance-vs-descriptor-count} under the same experimental setting, \ie{} inserting a learnable $1\times1$ convolution after MinkLoc3D’s final ResNet block to project the 256-d feature map down to $d=\{16,32,64,128\}$. Fig. \ref{fig: kernel matrix comparison} shows the results. We use a RBF kernel following \citet{zhang2022kernelized} with default width of 1 without tuning it further. We used same matrix normalization method as iSQRT-COV since kernel matrix is also symmetric positive definite (SPD).

The results show minor improvement in retrieval performance in all cases except $d=16$ and $d=256$ over the performance of iSQRT-COV. Tuning the RBF kernel width with grid search or automatic learning as done in \citep{engin2018deepkspd} may further improve the performance, however, we leave this exploration for the future works.

\section{Covariance Matrix is Permutation Invariant}

In this section, we give the detail of permutation invariant property of covariance matrices mentioned in line 140 of the main text.

\begin{lemma}[Symmetry of covariance pooling]\label{lem:covariance_symmetry}
Covariance pooling is permutation invariant: it returns the same mean and covariance no matter how the input descriptors are ordered.
\end{lemma}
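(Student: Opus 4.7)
The plan is to model a reordering of the $N$ input descriptors by an $N\times N$ permutation matrix $P$ acting on the right of the data matrix $\mathbf{X}\in\mathbb{R}^{d\times N}$, and then to verify that both the sample mean $\boldsymbol{\mu}$ and the sample covariance $\mathbf{C}$ from Eq.~(\ref{eq: covariance matrix}) are unchanged when $\mathbf{X}$ is replaced by $\mathbf{X}P$. The key algebraic fact I will lean on is that every permutation matrix is orthogonal, i.e.\ $P P^{\top}=I$, together with the elementary observation that $P^{\top}\mathbf{1}=\mathbf{1}$ since $P$ merely reorders the entries of the all-ones vector.

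First, I would show mean invariance: writing $\boldsymbol{\mu}=\tfrac{1}{N}\mathbf{X}\mathbf{1}$, the permuted mean is $\tfrac{1}{N}(\mathbf{X}P)\mathbf{1}=\tfrac{1}{N}\mathbf{X}(P\mathbf{1})=\tfrac{1}{N}\mathbf{X}\mathbf{1}=\boldsymbol{\mu}$. Next, I would show that the column-centred matrix transforms covariantly under $P$: because the mean does not change, $\overline{\mathbf{X}P}=\mathbf{X}P-\boldsymbol{\mu}\mathbf{1}^{\top}=(\mathbf{X}-\boldsymbol{\mu}\mathbf{1}^{\top})P=\bar{\mathbf{X}}P$, using once more that $\mathbf{1}^{\top}P=\mathbf{1}^{\top}$. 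Finally, I would substitute this into the covariance formula and exploit orthogonality:
\begin{equation*}
\tfrac{1}{N}\,\overline{\mathbf{X}P}\,\overline{\mathbf{X}P}^{\top}
=\tfrac{1}{N}\,\bar{\mathbf{X}}P P^{\top}\bar{\mathbf{X}}^{\top}
=\tfrac{1}{N}\,\bar{\mathbf{X}}\bar{\mathbf{X}}^{\top}=\mathbf{C},
\end{equation*}
so the covariance is identical to the one computed from the original ordering.

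There is essentially no hard step here; the only subtlety worth flagging is that the two cancellations rely on slightly different properties of $P$: mean invariance uses $P\mathbf{1}=\mathbf{1}$ (columns of $P$ sum to one), while covariance invariance uses $P P^{\top}=I$ (orthogonality). I would state both explicitly so that the reader sees exactly why the result would fail for a general linear reweighting of samples and holds specifically for permutations. A brief closing remark would note that this invariance extends verbatim to the channel-partitioned variant in \coolname{}, since partitioning acts on rows and therefore commutes with the right-multiplication by $P$; each group-wise covariance $\mathbf{C}_i$ inherits the same argument, and the weighted aggregate $\mathbf{z}=\sum_i w_i \mathbf{c}_i$ is then automatically permutation invariant as claimed in the main text.
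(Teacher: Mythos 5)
Your proof is correct, but it takes a different route from the paper's. The paper works entirely at the level of sums: it writes $\mu$ and $\Sigma$ as averages over the index set, observes that commutativity of addition makes $\widetilde\mu=\mu$, and then notes that the centred multisets $\{p'_t-\mu\}$ and $\{p'_{\pi(t)}-\widetilde\mu\}$ coincide, so the covariance sum is term-for-term a reordering of the original. You instead encode the reordering as a right-multiplication by a permutation matrix $P$ and reduce everything to two algebraic identities, $P\mathbf{1}=\mathbf{1}$ for the mean and $PP^{\top}=I$ for the covariance. The paper's argument is more elementary and arguably closer to the ``unordered point set'' framing of the LPR setting; yours is more structural and has the genuine advantage of isolating exactly which properties of $P$ each cancellation needs, which makes transparent why the invariance fails for a general column reweighting but survives for permutations, and why it passes unchanged through the row-wise channel partitioning of \coolname{} (your closing remark, which the paper states separately in prose rather than proving). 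One small slip: $P\mathbf{1}=\mathbf{1}$ says the \emph{rows} of $P$ sum to one, not the columns (that would be $\mathbf{1}^{\top}P=\mathbf{1}^{\top}$, which you correctly invoke for the centering step); for a permutation matrix both hold, so nothing breaks, but the parenthetical label should be fixed.
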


\begin{proof}
Let \(P=\{p'_1,\dots,p'_N\}\subset\mathbb{R}^d\) and define the empirical statistics
\[
\mu=\frac1N\sum_{t=1}^{N}p'_t,
\qquad
\Sigma=\frac1N\sum_{t=1}^{N}(p'_t-\mu)(p'_t-\mu)^{\top}.
\]

For any permutation \(\pi\) of \(\{1,\dots,N\}\) set
\(\widetilde P=\{p'_{\pi(1)},\dots,p'_{\pi(N)}\}\).
Commutativity of addition implies
\[
\widetilde\mu=\frac1N\sum_{t=1}^{N}p'_{\pi(t)}=\mu,
\]
so the centred multisets \(\{p'_t-\mu\}\)  
\(\{p'_{\pi(t)}-\widetilde\mu\}\) coincide.  Therefore
\[
\widetilde\Sigma
=\frac1N\sum_{t=1}^{N}
      (p'_{\pi(t)}-\widetilde\mu)(p'_{\pi(t)}-\widetilde\mu)^{\top}
=\frac1N\sum_{t=1}^{N}
      (p'_t-\mu)(p'_t-\mu)^{\top}
=\Sigma.
\]
Both \(\mu\) and \(\Sigma\) are thus unaffected by \(\pi\), proving the claim.\qedhere
\end{proof}

\section{Datasets}
In this section, we give the details of datasets mentioned in line 202 of the main text.

We conduct extensive experiments with four large-scale, public datasets for LiDAR place recognition (Oxford Robotcar, In-house, \mulran{}, \wildplaces{}) to demonstrate the effectiveness of \coolname{}. The details are given below.

\textbf{Oxford Robotcar} \cite{maddern20171} is the most widely used dataset for LiDAR place recognition. It has LiDAR scans captured by traveling a route of 44 times ($\approx$10 km) across Oxford, UK, over a year. The dataset is evaluated by taking the point clouds of one trip as queries and iteratively matched against the point clouds of other trips. In this work, we follow the training and testing splits introduced by Uy et al. \cite{uy2018pointnetvlad}. A total of 24.7k point clouds were used for training and testing.

\textbf{In-house}  \cite{uy2018pointnetvlad} is also a popular dataset in the literature. It has LiDAR scans captured by traveling a route of 5 times ($\approx$10 km) across three regions of Singapore -- a Business District (B.D.), a Residential Area (R.A.), and a University Sector (U.S.). Similar to the Oxford dataset, we follow the testing split introduced by Uy et al. \cite{uy2018pointnetvlad} and use the point clouds from a single trip as queries and the remaining point clouds from other trips as databases iteratively. However, unlike the Oxford dataset, the in-house dataset is only used for testing purposes to demonstrate generalisability. A total of $\approx$4.5k point clouds were used for testing.

\textbf{\mulran}~\cite{kim2020mulran} has LiDAR scans captured by traveling through various urban environments. Among those, we use the traversals of Sejong City (Sejong) and Daejeon Convention Center (DCC) (each 3 runs $\approx$15 km). Following the work of \cite{komorowski2021egonn,vidanapathirana2023spectral}, we produce the training and testing splits of both environments and train using only Sejong sequences 1 and 2. The evaluation is done on test sets of both environments.

\textbf{\wildplaces}~\cite{knights2023wild} is a very recent dataset. Unlike the above datasets, it is captured in natural, forest environments. It has LiDAR scans of 33km captured from hiking trails in Brisbane, Australia over 14 months. Following its original paper, we evaluate Venman and Karawatha environments using the inter-sequence training and testing protocol.

\section{Visualization of Features}
In this section, we give visualizations of the features mentioned in line 316 of the main text.

\begin{figure}
    \centering
    \includegraphics[width=1\linewidth]{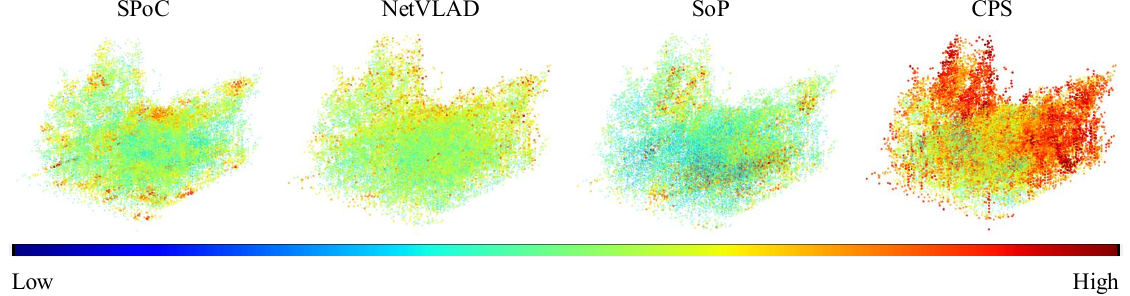}
    \caption{Visualization of features learned by Minkloc3D \cite{komorowski2021minkloc3d} using SPoC \citep{radenovic2018fine}, NetVLAD \citep{arandjelovic2016netvlad}, SoP \cite{vidanapathirana2022logg3d} and our CPS feature aggregators on a LiDAR scan from the \wildplaces{} (Karawatha environment) dataset. The overlaid heatmap is computed as the average activation across all channels at the last backbone layer, which produces coarse “blob” shapes due to reduced spatial resolution at that stage. Colors range from blue (low activation) to red (high activation) as shown in the legend.}
    \label{fig: feature-visulaizations}
\end{figure}

Fig. \ref{fig: feature-visulaizations} presents qualitative feature maps for SPoC \citep{radenovic2018fine}, NetVLAD \citep{arandjelovic2016netvlad}, SoP \cite{vidanapathirana2022logg3d} and our CPS aggregation on an example LiDAR scan from the Karawatha environment of the \wildplaces{} dataset. Heatmaps encode the average activation across all channels in the final backbone layer, yielding a coarser point resolution as noted in the caption. In this example, CPS produces generally higher activation values in semantically meaningful regions than the other three methods. The quantitative comparison of these methods is reported in Fig. \ref{fig: top_fig}.

\end{document}